\newcommand{\walkertwo}{\textsc{walker2d-2}\xspace}
\newcommand{\hopperthree}{\textsc{hopper-3}\xspace}
\newcommand{\halfcheetah}{\textsc{halfcheetah-2}\xspace}
\newcommand{\kheperaxmulti}{\textsc{kheperax-2}\xspace}
\newcommand{\antmulti}{\textsc{ant-2}\xspace}
\newcommand{\mourqd}{\textsc{mour-qd}\xspace}
\newcommand{\mourqdlong}{Multi-Objective Unstructured Repertoire for Quality-Diversity\xspace}
\newcommand{\mome}{\textsc{mome}\xspace}
\newcommand{\moauroragrid}{\textsc{mo-aurora-grid}\xspace}
\newcommand{\momesmall}{\textsc{mome-small}\xspace}
\newcommand{\momelarge}{\textsc{mome-large}\xspace}
\newcommand{\mapelites}{\textsc{map-elites}\xspace}
\newcommand{\aurora}{\textsc{aurora}\xspace}
\newcommand{\moqd}{\textsc{moqd}\xspace}
\newcommand{\uqd}{\textsc{uqd}\xspace}
\newcommand{\qd}{\textsc{qd}\xspace}
\newcommand{\mo}{\textsc{mo}\xspace}
\newcommand{\replications}{\textsc{10}\xspace}
\newcommand{\qdscore}{\textsc{qd-score}\xspace}
\newcommand{\moqdscore}{\textsc{moqd-score}\xspace}
\newcommand{\globalhypscore}{\textsc{global-hypervolume}\xspace}
\newcommand{\coverage}{\textsc{coverage}\xspace}
\title{Multi-Objective Quality-Diversity in Unstructured and Unbounded Spaces}
\author{Hannah Janmohamed}
\affiliation{%
  \institution{Imperial College London}
  \city{London}
  \country{UK}}
\email{hnj21@imperial.ac.uk}
\author{Antoine Cully}
\affiliation{%
  \institution{Imperial College London}
  \city{London}
  \country{UK}}
\email{a.cully@imperial.ac.uk}
\date{October 2024}
\begin{document}

\begin{abstract}

Quality-Diversity algorithms are powerful tools for discovering diverse, high-performing solutions.
Recently, Multi-Objective Quality-Diversity (\moqd) extends \qd to problems with several objectives while preserving solution diversity. 
\moqd has shown promise in fields such as robotics and materials science, where finding trade-offs between competing objectives like energy efficiency and speed, or material properties is essential.
However, existing methods in \moqd rely on tessellating the feature space into a grid structure, which prevents their application in domains where feature spaces are unknown or must be learned, such as complex biological systems or latent exploration tasks.
In this work, we introduce \mourqdlong (\mourqd), a \moqd algorithm designed for unstructured and unbounded feature spaces.
We evaluate \mourqd on five robotic tasks.
Importantly, we show that our method excels in tasks where features must be learned, paving the way for applying \moqd to unsupervised domains.
We also demonstrate that \mourqd is advantageous in domains with unbounded feature spaces, outperforming existing grid-based methods. 
Finally, we demonstrate that \mourqd is competitive with established \moqd methods on existing \moqd tasks and achieves double the \moqdscore in some environments. 
\mourqd opens up new opportunities for \moqd in domains like protein design and image generation.
\end{abstract}

\begin{CCSXML}
<ccs2012>
   <concept>
       <concept_id>10003752.10003809.10003716.10011136.10011797.10011799</concept_id>
       <concept_desc>Theory of computation~Evolutionary algorithms</concept_desc>
       <concept_significance>500</concept_significance>
       </concept>
   <concept>
       <concept_id>10010405.10010481.10010484.10011817</concept_id>
       <concept_desc>Applied computing~Multi-criterion optimization and decision-making</concept_desc>
       <concept_significance>500</concept_significance>
       </concept>
   <concept>
       <concept_id>10010520.10010553.10010554.10010556.10011814</concept_id>
       <concept_desc>Computer systems organization~Evolutionary robotics</concept_desc>
       <concept_significance>300</concept_significance>
       </concept>
 </ccs2012>
\end{CCSXML}

\ccsdesc[500]{Theory of computation~Evolutionary algorithms}
\ccsdesc[500]{Applied computing~Multi-criterion optimization and decision-making}
\ccsdesc[300]{Computer systems organization~Evolutionary robotics}

\keywords{Quality-Diversity, Multi-Objective Optimisation, Neuroevolution}

\maketitle

\vspace{-10pt}
\section{Introduction}

\begin{figure}
    \centering
    \includegraphics[width=0.9\linewidth]{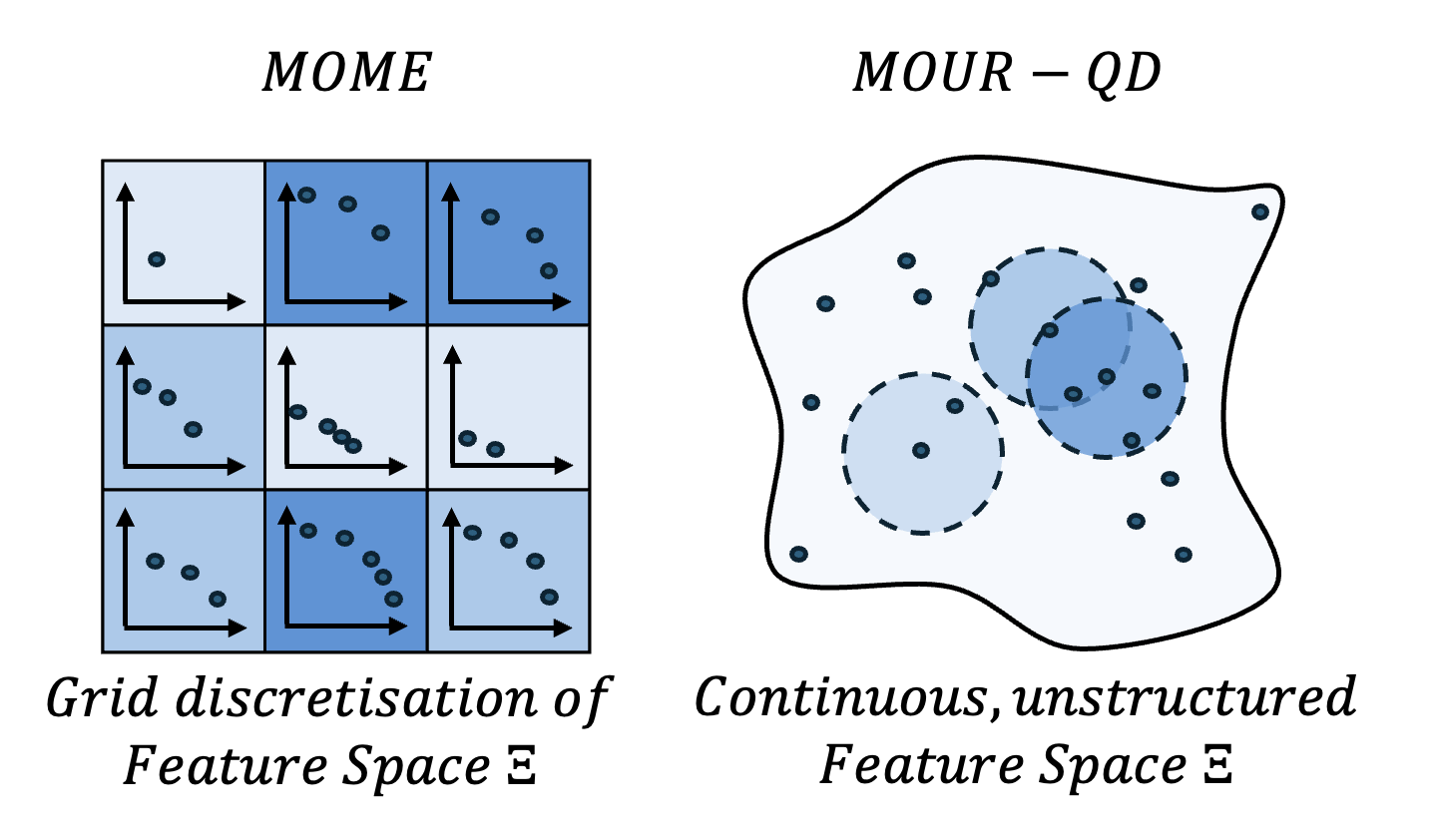}
    \caption{\textit{Left:} Existing \moqd methods store Pareto Fronts in each cell of a MAP-Elites grid. \textit{Right:} \mourqd introduces an unstructured archive, storing solutions in a continuous manner throughout the feature space. Pareto Fronts are defined locally, using a radius $2l$ around each solution.}
    \label{fig:teaser}
\end{figure}

In recent years, Quality-Diversity (\qd) algorithms \cite{mapelites, qdunifying} have gained significant attention for their ability to discover a diverse set of high-performing solutions.
Unlike traditional optimisation methods that focus on identifying a single optimal solution, \qd methods aim to uncover a collection of solutions that vary across different behaviours or \textit{features} while still performing well.
This ability to balance exploration and optimisation has proven valuable across a range of real-world, complex applications.
For example, in robotics, \qd has been applied to damage recovery by providing fallback behaviours that allow robots to adapt to hardware failures, such as broken limbs or motors \cite{nature, hbr}.
Alternatively, \qd has been used to generate diverse adversarial prompts in order to train Large Language Models, helping identify vulnerabilities and improve model safety by testing the system across a broad range of inputs \cite{rainbowteaming}.

Building on the foundations of \qd, Multi-Objective Quality-Diversity (\moqd) extends this framework to tackle problems where solutions must balance multiple competing objectives while maintaining behavioural diversity \cite{mome, mome-pgx, mome-p2c}.
This approach has proven valuable across a range of applications too.
In robotics, \moqd enables exploration of trade-offs between objectives such as energy consumption and speed, or fitness and reproducibility, while preserving a diverse repertoire of behaviours \cite{mome, mome-pgx, mome-p2c, performance-reproducibility}. 
In materials design, it facilitates the discovery of materials which achieve different trade-offs across material objectives, such as strength, durability, and conductivity, with diverse properties, providing alternative options if some materials prove impractical to synthesise \cite{moqdcsp}.
In architecture, \moqd supports the generation of building designs that balance objectives like ventilation efficiency and noise reduction, while offering multiple diverse options to suit different requirements or preferences \cite{mcx}.
\moqd plays a critical role in solving problems that require optimisation across multiple objectives while preserving diversity, making it a powerful tool for tackling complex and multi-faceted challenges.

Despite these advances, existing \moqd methods rely on discretising the feature space into a structured \mapelites grid, where each cell contains a Pareto front as illustrated in \Cref{fig:teaser}.
While effective, this approach assumes that the features of the tasks can be hand-defined and that their bounds are known beforehand.
These assumptions can be restrictive, particularly in scenarios where the defining features of solutions are unclear or difficult to specify, or when the bounds of the feature space are unknown.
For example, tasks requiring latent exploration \cite{aurora-original, aurora-csc, dqd} are often unsuitable for grid-based \qd algorithms because the shape of the latent space is unknown in advance and could dynamically change.

In single-objective \qd, unstructured archives have been used to enable automatic feature learning \cite{aurora-original, aurora-csc}, allowing solutions to be organised without predefined feature boundaries.
However, adapting this concept to the multi-objective setting is particularly challenging.
In \moqd, the grid structure provides a natural way to organise and maintain Pareto fronts within each cell.
Without the grid, it becomes unclear how to define Pareto fronts in a continuous, unstructured feature space, posing a significant barrier to extending unstructured containers to multi-objective problems.
As a result, no existing methods address these challenges in \moqd, which limits their applicability to problems requiring latent space exploration or unsupervised feature discovery.

This paper presents \mourqdlong (\mourqd), a novel \moqd algorithm that uses a multi-objective unstructured archive, as illustrated in \Cref{fig:teaser}.
Unlike existing \moqd approaches, \mourqd does not rely on predefined features or fixed feature space bounds.
Instead, it stores solutions in an unstructured and unbounded manner and defines Pareto Fronts locally around solutions.

We evaluate \mourqd across 5 continuous control robotics tasks and show that it offers three key advantages.
First, we evaluate \mourqd on existing \moqd tasks and demonstrate that it is competitive with current \moqd baselines and doubles the \moqdscore of current \moqd baselines in some environments.
This confirms that \mourqd is an effective solution for advancing existing \moqd applications.
Second, we show that \mourqd improves on current \moqd baselines in its robustness to cases where feature space bounds are unknown.
By removing the need for prior knowledge of feature space limits, \mourqd makes \moqd algorithms more flexible and adaptable to a wider range of complex, real-world problems.
Finally, we show that \mourqd enables \moqd to operate in unsupervised settings by learning features directly from data.
Crucially, this capability opens the door for applying \moqd to entirely new domains, such as protein design and latent space exploration, where manually defining features is impractical.
By addressing these challenges, \mourqd represents a meaningful step forward in the evolution of \moqd, unlocking new opportunities for exploring diversity and trade-offs in challenging optimisation problems.
All of our code was implemented using the QDax codebase \cite{qdax, qdaxrepo} which uses massive parallelisation via Jax \cite{jax2018github}.
Our code was fully containerised and is publicly available at \url{https://github.com/adaptive-intelligent-robotics/MOUR-QD}.

\section{Background and Related Works}

\subsection{Multi-Objective Optimisation}\label{sec:backgroundmoo}

\begin{figure}
    \centering
    \includegraphics[width=0.4\linewidth]{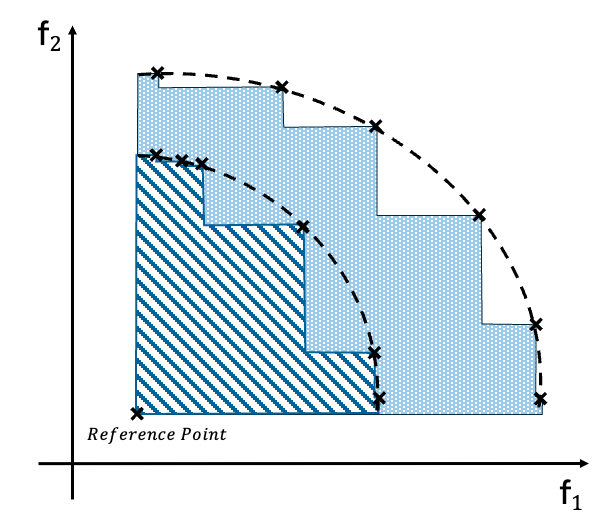}
    \caption{Two sets of solutions forming two Pareto Fronts. The hypervolume is reflected by the shaded areas in the objective space between the solutions and the front. The outer set of solutions achieve higher in both objectives and thus this front has a higher hypervolume.}
    \label{fig:pfs}
\end{figure}

Traditional optimisation algorithms involve finding a solution $\theta \in \Theta$ that maxises the performance of an objective function $f: \Theta \rightarrow \mathbb{R}$.
By contrast, Multi-Objective Optimization (\mo) extends traditional single-objective optimization by involving several objectives that must be optimized simultaneously.
Typical examples of \mo problems include optimising performance and cost in engineering design, maximising accuracy while minimising complexity in machine learning models, or balancing resource allocation between different competing demands in scheduling or logistics \cite{moea_survey, mooreview}.

Formally, in \mo, a solution $\theta \in \Theta$ is evaluated using a vector of objective functions $\mathbf{F}(\theta) = [f_1(\theta), f_2(\theta), \dots, f_m(\theta)]$, where $f_i: \Theta \to \mathbb{R}$ represents the $i$-th objective \cite{moobook, mooreview}. 
If one solution performs better than another solution across all of the objectives, we say that it \textit{dominates} it.
However, objectives are often conflicting: improving one objective, such as minimising cost, may worsen another, such as maximising performance \cite{moea_survey}.
Therefore, some solutions may not dominate one-another, but rather offer a different trade-off across the objectives.
Consequently, unlike in single-objective optimisation which aims to find a single optimal solution, \mo aims to find a set of non-dominated solutions that offer different trade-offs across competing objectives.

The solutions that represent the best trade-offs are those for which no other solution can improve one objective without degrading at least one other.
These solutions are referred to as \textit{Pareto-optimal}. 
Mathematically, a solution $\theta$ is Pareto-optimal if there exists no other solutions $\theta' \in \Theta$ such that $f_i(\theta') \geq f_i(\theta) \forall i \in \{1, \dots, m\}$, with at least one strict inequality.
The collection of all such solutions from the entire search space $\Theta$ forms the \textit{Pareto front}, denoted as $\mathcal{P}(\Theta)$.

Due to the vastness and complexity of many search spaces, it is often infeasible to find the true Pareto Front, and instead algorithms aim to find a close approximation.
The performance of \mo algorithms is often assessed using the hypervolume metric.
Given a reference point $\mathbf{r} \in \mathbb{R}^m$ that is dominated by all solutions in the objective space, the hypervolume measures the volume of the region dominated by the solutions in the Pareto front relative to $\mathbf{r}$. 
Qualitatively, as visualised in \Cref{fig:pfs} it represents the extent of the objective space “covered” by the Pareto front, where larger hypervolume values indicate better exploration and optimization of the trade-offs.
Mathematically, the hypervolume $\Xi(\mathcal{P})$ of a Pareto Front $\mathcal{P}$ is computed as:
\begin{equation}
    \Xi(\mathcal{P}) = \lambda ({\theta \in \Theta \,|\, \exists \, s \in \mathcal{P}, s \succ x \succ \mathbf{r}})
   \label{eqn:hypervolume}
\end{equation}
where $\lambda$ denotes the Lebesque measure \cite{usinghypervolumes, guidetomorl}.

\subsection{Quality-Diversity}

Quality-Diversity (\qd) algorithms \cite{mapelites, qdunifying} aim to generate a diverse collection of high-performing solutions.
The performance of a solution is measured by a \textit{fitness} function $f: \Theta \to \mathbb{R}$, while its characteristics are referred to as \textit{features} (also known as behaviour descriptors or measures in the literature \cite{qdunifying}) and are captured by a feature function $\Phi: \Theta \to \mathbb{R}^d$.

In general, \qd algorithms are initialised by randomly generating solutions and adding them to an archive $\mathcal{A}$, which acts as a repository for storing high-quality, diverse solutions.
At each iteration, a batch of solutions $\theta_1, ..., \theta_b$ are selected from the archive and subjected to genetic variation, such as mutation or crossover, to produce offspring. These offspring are then evaluated to obtain their fitnesses $f(\theta_1),..., f(\theta_b)$ and feature values $\Phi(\theta_1),..., \Phi(\theta_b)$.
The archive is updated by incorporating these offspring if they are either higher-performing than existing solutions or are novel compared to the other solutions in the archive.

The precise mechanisms for adding solutions to the archive vary depending on the type of archive being used. 
In structured archives, such as those used by \mapelites algorithms \cite{mapelites, nature, mome}, the feature space is tessellated into cells $\mathcal{C}_i$, and each cell can store at most one solution.
Offspring solutions are considered for addition to the cell that corresponds to their feature.
If the cell is empty, then the offspring solution is added. 
Otherwise, the new solution is only added to the archive if it has higher fitness than the existing occupant, in which case it replaces the occupant.
The aim of \mapelites algorithms is to fill the archive's cells with solutions that are as high-performing as possible.
Thus, the performance of these algorithms is often assessed by the \qdscore which is the sum of fitnesses of all solutions in the grid.
More formally, given an archive that has been tessellated into $k$ cells, the \qdscore \cite{qdunifying} is given by:
\begin{equation}
    \max_{\theta\in\Theta} \sum_{i=1}^{k} f(\theta_i), \,\,\text{where} \,\,\forall i, \Phi(\theta_i)\in \mathcal{C}_i
\end{equation}

In contrast to structured archives, unstructured archives \cite{nslc, qdunifying, aurora-original, aurora-csc} do not discretize the feature space into predefined cells.
Instead, they maintain a set of solutions and rely on measures of similarity or distance between feature vectors to preserve diversity.
Solutions are considered for addition to the archive based on a threshold distance $l$ that controls how similar solutions can be to one another. 
A new solution is added if its feature is at least $l$ away from the features of all existing solutions in the archive, ensuring novelty.
Alternatively, if the new solution lies within a distance $l$ of existing solutions, it can still be added if it has higher fitness than those nearby solutions, in which case it replaces the less effective ones.
This mechanism ensures that the archive maintains both diversity and high-quality solutions.
The $l$-value plays a crucial role in balancing these factors. 
If the $l$-value is too large, it is harder to add solutions to the archive, as they must either be significantly novel or have a very high fitness to be added.
In this case, the algorithm may struggle to find solutions which it can add to the archive, and hence may struggle to explore the feature space effectively.
Conversely, an $l$-value which is too small means that new solutions do not have to compete with existing ones, allowing poor-performing solutions to be added to the archive.
In this case, the archive will quickly populate with sub-optimal solutions, reducing the algorithm's selection pressure and thus performance.
Tuning $l$ is thus critical for achieving a balance between exploration and exploitation. 
While this approach adds a layer of complexity compared to \mapelites-based algorithms, it  allows unstructured archives to handle complex, or irregular feature spaces more flexibly than structured archives.

To address the limitations of predefined grids, some works improve grid-based methods to better handle unknown feature spaces.
For example, MAP-Elites with sliding Boundaries \cite{mesb} periodically remaps the boundaries of \mapelites grid cells to reflect the distribution of solutions, progressively increasing its resolution over time.
Alternatively, Cluster-Elites \cite{clusterelites} continuously samples the feature space to relocate the centroids of a CVT MAP-Elites grid \cite{cvt} in order to maximise coverage.
Despite these innovations, both methods remain grid-based, which are not suitable for tasks that require learned feature spaces (see \Cref{sec:unsupervised_results}).

\subsection{Multi-Objective Quality-Diversity}

Quality-Diversity algorithms have proven to be effective at finding diverse collections of solutions that optimize a single objective.
However, in many real-world scenarios, solutions must balance multiple objectives, necessitating a framework that combines diversity with multi-objective optimisation.
For example, in materials design, researchers may seek solutions that vary in structural properties (e.g. their band gaps) while exploring trade-offs between objectives such as magnetism and stability \cite{moqdcsp}.

Multi-Objective Quality-Diversity (\moqd) addresses this need by combining \mo and \qd.
The goal of \moqd is to identify solutions that are diverse in their features $\Phi: \Theta \to \mathbb{R}^d$, while for each distinct feature, capturing a Pareto Front of solutions that optimize multiple objectives $\mathbf{F}(\theta) = [f_1(\theta), f_2(\theta), \dots, f_m(\theta)]$. 

Multi-Objective Quality-Diversity was first achieved via the Multi-Objective MAP-Elites (\mome) algorithm \cite{mome}, which builds upon the \mapelites framework.
In \mome, the feature space is discretised into cells $\mathcal{C}_i$.
However, in \mome, each cell stores a Pareto front of solutions rather than a single best solution.
At each generation, offspring solutions are generated via genetic variation and then evaluated to find their fitness on each objective and  their features.
Then, when a new solution is assigned to a cell, it is added if it is belongs to the Pareto Front within the cell and replaces any solutions that it dominates.
Otherwise the new solution is discarded. 

The performance of \moqd algorithms is often measured via the \moqd score, which aggregates the quality of trade-offs and diversity across the feature space.
Specifically, the \moqd score \cite{mome} is the sum of hypervolumes of the Pareto fronts across all occupied cells:
\vspace{-2mm}

\begin{equation}
    \max_{\theta\in\Theta} \sum_{i=1}^{k} \Xi(\mathcal{P}_i), \,\,\text{where} \,\,\forall i, \mathcal{P}_i = \mathcal{P}({\theta|\Phi(\theta)\in \mathcal{C}_i})
    \label{eqn:moqd}
\end{equation}

Since \mome, several other \moqd works have been proposed \cite{mome-pgx, mome-p2c, c-mome} to improve its performance.
For example, some works leverage Reinforcement Learning to improve the performance of \mome in high-dimensional search spaces \cite{mome-pgx, mome-p2c}.
Alternatively, other methods have proposed using counterfactual agents in order to improve the coverage of behavioural niches \cite{c-mome}.
However all \moqd methods to date rely on a tessellated feature space as illustrated in \Cref{fig:teaser} in order to maintain Pareto fronts that span diverse features.
In this paper, we extend Multi-Objective Quality-Diversity to unstructured archives, which is non-trivial due to the challenges of defining Pareto fronts in continuous feature spaces.


\begin{figure*}[ht!]
    \centering
    \includegraphics[width=0.9\linewidth]{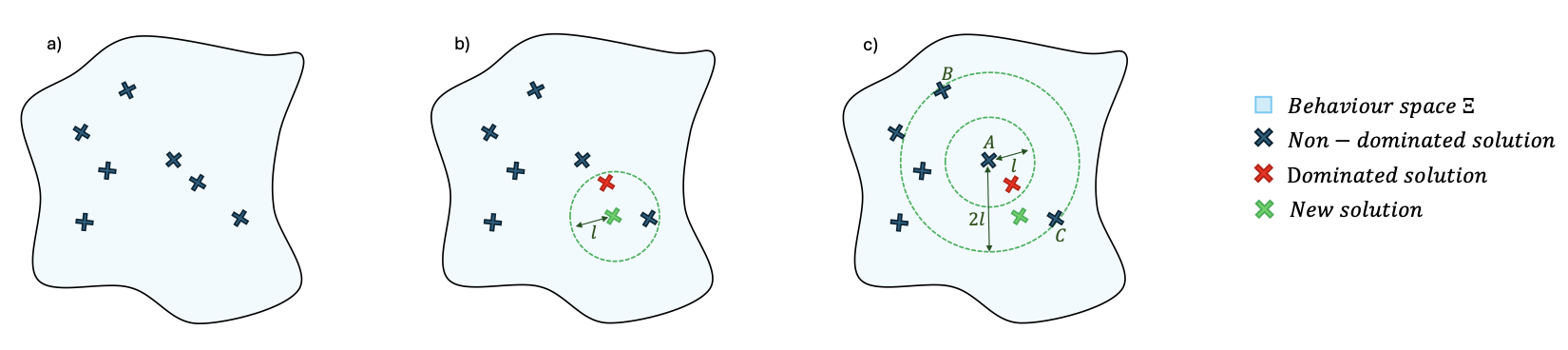}
    \caption{ a) In \mourqd solutions are stored in a continuous, unstructured manner. b) When adding a solution to the archive, it is compared with other solutions that lie within a radius $l$ in the feature space. Solutions that offer a different trade-off are kept, but solutions that are dominated by the new solution are removed. c) Removing dominated solutions can negatively impact the set of possible trade-offs within $l$ of nearby solutions, but it guarantees an improved set of trade-offs within $2l$ of them.}
    \label{fig:method}
\end{figure*}

\subsection{Unsupervised Quality-Diversity}

A key limitation of traditional Quality-Diversity algorithms is the need to define the features in advance.
In many tasks, this can be labour-intensive or very difficult, particularly when the defining features of a solution are not well understood beforehand.
To address this, \aurora introduced unsupervised Quality-Diversity \cite{aurora-original, aurora-csc, ruda}, which removes the need for a hand-defined feature.

In \aurora, the manually defined feature function $\Phi: \Theta \to \mathbb{R}^d$ is replaced by a learned one $\Phi_\text{learned}$.
Usually, $\Phi_\text{learned}$ is a dimensionality reduction technique that projects data generated by solutions during evaluation into a low-dimensional representation.
For instance, in robotics, \aurora can use an auto-encoder to encode the trajectory of a robot into a concise feature representation that highlights key aspects of its movement.
In this example, if $x$ denotes the data from the evaluation of a solution $\theta \in \Theta$, the descriptor function $\Phi_\text{learned}$ is defined as the encoding produced by an auto-encoder: $\Phi_\text{learned}(\theta) = \text{Encoder}(x)$.
By learning directly from data collected during the evaluation, the salient aspects of a solution are still captured but features do not need to be defined in advance.

Aside from learning the feature space, much of the remaining algorithmic flow of \aurora adheres to a standard \qd procedure.
In each iteration, solutions are selected, undergo variation and are added back to the archive (which is usually unstructured) based on their learned features.
However, \aurora introduces some techniques in order to improve the stability and performance of the algorithm \cite{aurora-original, aurora-csc}. 
For example, since solutions generated early in the algorithm tend to be less diverse than those generated later, the descriptor function $\Phi_\text{learned}$ is continuously retrained throughout the execution of the algorithm.
This ensures that the learned feature space adapts as the archive becomes more diverse.
On the other hand, continuous retraining can cause the features of existing solutions in the archive to change.
To address this, when retraining occurs, all of the solutions are removed from the archive and are re-added based on their updated descriptors.
Since retraining may change the shape of the latent space learned by $\Phi_\text{learned}$, when the update occurs, \aurora also dynamically updates the distance thresholds used to evaluate diversity.
For example, the Container-Size Control (CSC) approach updates the $l$-value based on the number of solutions in the archive:

\begin{equation}
    l \leftarrow l \times \big( 1 + k*(|\mathcal{A}| - N_{target})\big),
\end{equation}\label{eqn:csc}

where $|\mathcal{A}|$ is the current number of solutions in the archive $\mathcal{A}$, $N_{\text{target}}$ is the target number of solutions in the archive and $k$ is a constant that controls the rate at which the $l$-value is updated.
For a comprehensive explanation of the \aurora algorithm, readers are encouraged to refer to the original work \cite{aurora-original, aurora-csc}.


\section{Method}

In this section, we introduce \mourqdlong (\mourqd), a novel  Multi-Objective Quality-Diversity (\moqd) algorithm to unstructured and unbounded behaviour spaces, as visualised in \Cref{fig:teaser}.
Unlike traditional \moqd approaches that rely on structured archives, our method stores solutions in a fully unstructured manner throughout the feature space.
This method is particularly important in \moqd methods where the features of interest may not be known beforehand and must be learned during the algorithm’s execution \cite{aurora-original, aurora-csc, ruda}.
In such cases, the shape and bounds of the latent space are typically unknown, making it impossible to define a suitable grid structure required by existing \moqd methods.
Furthermore, even when approximate bounds of the latent space can be estimated, our results show that using such estimates for a grid-based method often leads to poor performance (see \Cref{sec:unsupervised_results}).

Even in cases where the features are hand-defined, the limits of diversity within the feature space may still be unclear.
As we demonstrate in \Cref{sec:bounded_results}, if the bounds of the \mapelites grid are defined incorrectly in these scenarios, the archive may fail to adequately represent trade-offs or capture the full spectrum of diverse solutions, resulting in sub-optimal performance.
As demonstrated in our results, addressing these challenges is critical for achieving robust and flexible \moqd in a variety of domains.

\subsection{Multi-Objective Quality-Diversity Unstructured Archive}

In \mourqd, as illustrated in \Cref{fig:method}a, we store solutions in a continuous, unstructured archive.
Storing solutions in an unstructured manner while preserving diversity in the feature space is non-trivial.
In single-objective unstructured archives, solutions are typically added to the repertoire based on a threshold radius  $l$: a new solution is added if no other solutions exist within  $l$  in the descriptor space or if it outperforms existing ones \cite{aurora-original, aurora-csc, nslc}.
This simple approach ensures that only the best-performing solutions are maintained in any local neighbourhood of the feature space.

Extending this approach to the multi-objective setting, however, introduces challenges.
In single-objective tasks, comparing solutions is straightforward due to a scalar fitness value.
By contrast, as explained in \Cref{sec:backgroundmoo}, \mo problems rely on the notion of Pareto optimality, where solutions must be evaluated across multiple objectives to determine if one dominates another.
This multi-dimensional comparison makes it difficult to decide if a solution should be added.

One possible strategy is to add a solution if it belongs to the local Pareto front within radius  $l$  and to remove any dominated solutions, as shown in \Cref{fig:method}b.
While this approach appears straightforward, it introduces a new issue.
Due to the continuous nature of the feature-space, removing dominated solutions can degrade the local Pareto front for other nearby solutions.
For example, consider solution  $A$, as illustrated in \Cref{fig:method}c.
Adding the new solution and removing a dominated one may lead to worse trade-offs within the radius  $l$  of Solution A.

Our key insight is that this trade-off degradation is mitigated when considering the collective performance over a larger region of the behaviour space.
Specifically, while local Pareto trade-offs within  a radius $l$ of Solution A  may worsen, the trade-offs within a radius of  $2l$  improve due to the inclusion of a higher-performing solution,  as illustrated in \Cref{fig:method}c.
Therefore, by selecting an appropriate value for  $l$  that results in the desired number of trade-offs within every  $2l$  radius of the feature space, the archive can maintain a balance between diversity and performance while ensuring meaningful Pareto fronts across the space.
A formal proof of this guarantee is provided in \Cref{app:proof}.

This proof guarantees an improvement in the multi-objective performance for the collection of solutions but does not explicitly guarantee diversity.
However, in practice, we observe that diversity can be achieved by tuning the $l$-value.
Moreover, in tasks where the feature space must be learned, the $l$-value can be automatically adjusted via the Container Size Control method (see \Cref{eqn:csc}), akin to single-objective unstructured repertoires.
We also note that, solutions within $2l$ of a given solution may not form a Pareto Front and may include some dominated solutions.
This is because some solutions (e.g. solutions $B$ and $C$ in \Cref{fig:method}c) which are not within $l$ from each-other have not competed via Pareto addition rules.
However, since by definition these solutions are at least $l$ away from each-other, they occupy distinct regions of the behaviour space and thereby preserving the overall diversity of the archive.

This simple yet powerful approach enables \moqd algorithms to be extended to latent spaces and unstructured tasks.
We demonstrate its effectiveness across various settings in \Cref{sec:results}.

\begin{table*}[ht!]
\centering
  \caption{Summary of evaluation tasks.}
  \label{tab:tasks}
  \scalebox{0.65}{
  \begin{tabular}{ c | p{4cm} | p{4cm} | p{4cm} | p{4cm} | p{4cm}}
  

    & \makecell{\includegraphics[width = 0.07\textwidth]{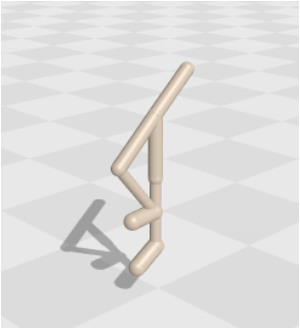}}
    
    & \makecell{\includegraphics[width = 0.07\textwidth]{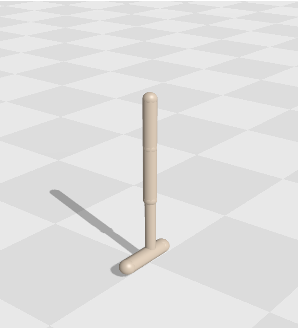}}

    & \makecell{\includegraphics[width = 0.07\textwidth]{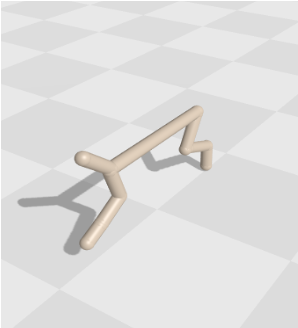}}
    
    & \makecell{\includegraphics[width = 0.07\textwidth]{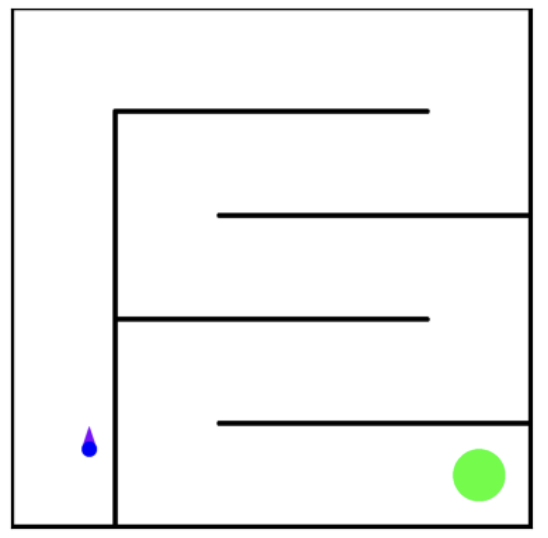}}
    
    & \makecell{\includegraphics[width = 0.07\textwidth]{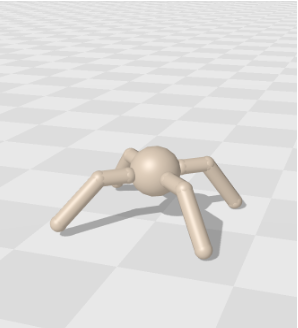}} \\

    \addlinespace[0.05cm]
    \midrule
    \addlinespace

    \textsc{Name}
    & \makecell{\walkertwo}
    & \makecell{\hopperthree}
    & \makecell{\halfcheetah}
    & \makecell{\kheperaxmulti}
    & \makecell{\antmulti}
    \\

    \addlinespace[0.05cm]
    \midrule
    \addlinespace[0.05cm]
    
    \textsc{Characteristics}
    &\makecell{-} 
    & \makecell{Tri-objective} 
    & \makecell{Unsupervised} 
    & \makecell{Unsupervised,\\Deceptive} 
    & \makecell{Unbounded} 
    \\

    \addlinespace[0.05cm]
    \midrule
    \addlinespace[0.05cm]

    \textsc{Feature}
    &\makecell{Feet Contact} 
    & \makecell{Feet Contact} 
    & \makecell{-} 
    & \makecell{-} 
    & \makecell{Final $x$-$y$ location} 
    \\

    \addlinespace[0.05cm]
    \midrule
    \addlinespace

    \makecell{\textsc{Data for} \\ \textsc{Unsupervised Features}}
    & \makecell{-} 
    & \makecell{-} 
    & \makecell{State trajectory\\ vector} 
    & \makecell{Image of final\\ maze state} 
    & \makecell{-} 
    \\

    \addlinespace[0.05cm]
    \midrule
    \addlinespace
    
    \textsc{Objectives}&

    \makecell{Forward velocity,\\ Energy consumption}
    
    &
    \makecell{Forward velocity,\\ Energy consumption, \\Jumping height}

    &
    \makecell{Forward velocity,\\ Energy consumption}
    
    &
    \makecell{Energy consumption, \\ Distance to Goal}
    
    &
    \makecell{Survival bonus - Energy, \\ Reproducibility}
    \\
    
    \addlinespace[0.05cm]
    \midrule
    \addlinespace
    
    \textsc{Comparative Baselines}&

    \makecell{\mome}
    
    &
    \makecell{\mome}

    &
    \makecell{\moauroragrid}
    &
    \makecell{\moauroragrid}

    &
    \makecell{\mome,\\ \momelarge, \\ \momesmall}
 

  \end{tabular}}
  \label{fig:experiments}
\end{table*}

\section{Experimental Setup}

\subsection{Evaluation Tasks}

We evaluate our methods on $5$ different continuous control robotics tasks which are summarised in \Cref{tab:tasks}.
In each of the tasks, the solutions correspond to neural-network controllers that control the robot at each time step $t$ based on their current state $s$.
We further categorise the tasks into three groups: traditional tasks, unsupervised tasks and unbounded tasks.

\subsubsection{Traditional Multi-Objective Quality-Diversity Tasks}
To ensure that our method can match the performance of \moqd methods that rely on a grid, we first evaluate our approach against two traditional \moqd robotics control tasks using the Brax suite \cite{brax}.
The first task, \walkertwo, is a bi-objective task where the aim is to find a controller that maximises the velocity and minimises the control cost of the Walker robot \cite{mome, mome-pgx, mome-p2c}.
The second task, \hopperthree, is a tri-objective task which uses the Hopper robot morphology and the same objectives as \walkertwo but additionally uses jumping height as a third objective.
In both tasks, the features of the robot are categorised by the proportion of time that the robot spends on each of its feet \cite{mome, mome-pgx, mome-p2c}.

\subsubsection{Unsupervised Multi-Objective Quality-Diversity Tasks}

One of the key challenges that our method is trying to tackle is to be able to apply \moqd methods to unsupervised settings.
Therefore, we also run our algorithm on two unsupervised \moqd tasks. 
In the first task, \halfcheetah, the HalfCheetah robot \cite{brax} must maximise its forward velocity while minimising its energy consumption.
In this task the features are learned using an LSTM from the concatenation of the robot's state, including its feet contact information, at each time step. 
The second unsupervised task, \kheperaxmulti, involves a roomba-style robot navigating a maze. 
We use the snake maze from the Kheperax suite \cite{kheperax} which we adapt to become a multi-objective task.
In particular, the first objective in the task is the energy consumption of the robot and the second is the total of the negative distance to the goal at each time step. 
Importantly, the layout of the maze requires the robot to initially move farther away from the goal before it can approach it, making the second objective highly deceptive.
To encourage goal-reaching behaviour, we add a small positive bonus to the reward when the robot is close to the goal.
In this task, we use a $64 \times64$ image of the robot in the maze at the final time-step $t$ to learn features via a CNN auto-encoder.
These images are normalised, so only the robot’s position is encoded, and any information about the maze is removed.
We include parameter information for both the auto-encoders and \aurora-based parameters for these tasks in \Cref{app:aurora_hyperparams}.

\subsubsection{Unbounded Multi-Objective Quality-Diversity Tasks}

The final task we assess our method on is \antmulti.
In this task, the objective is for the Ant robot \cite{brax} to travel in any direction for $1000$ time-steps, with the feature being its final $x$-$y$ location.
The challenge lies in not knowing a-priori how far the Ant can travel within this time-frame, making it an “unbounded” task.
Typically, determining the correct bounds for the feature space requires empirically identifying the robot’s potential limits. 
However, in this task, we assume these bounds are unknown and demonstrate that \mourqd can automatically adapt by discovering the effective limits of the feature space.
The objectives in this task are 1) a survival bonus - energy consumption and 2) the reproducibility of the solution \cite{performance-reproducibility}. 

\bigbreak

Further information for the tasks can be found in \Cref{app:rewardfns}.

\begin{figure*}
    \centering
    \includegraphics[width=0.95\linewidth]{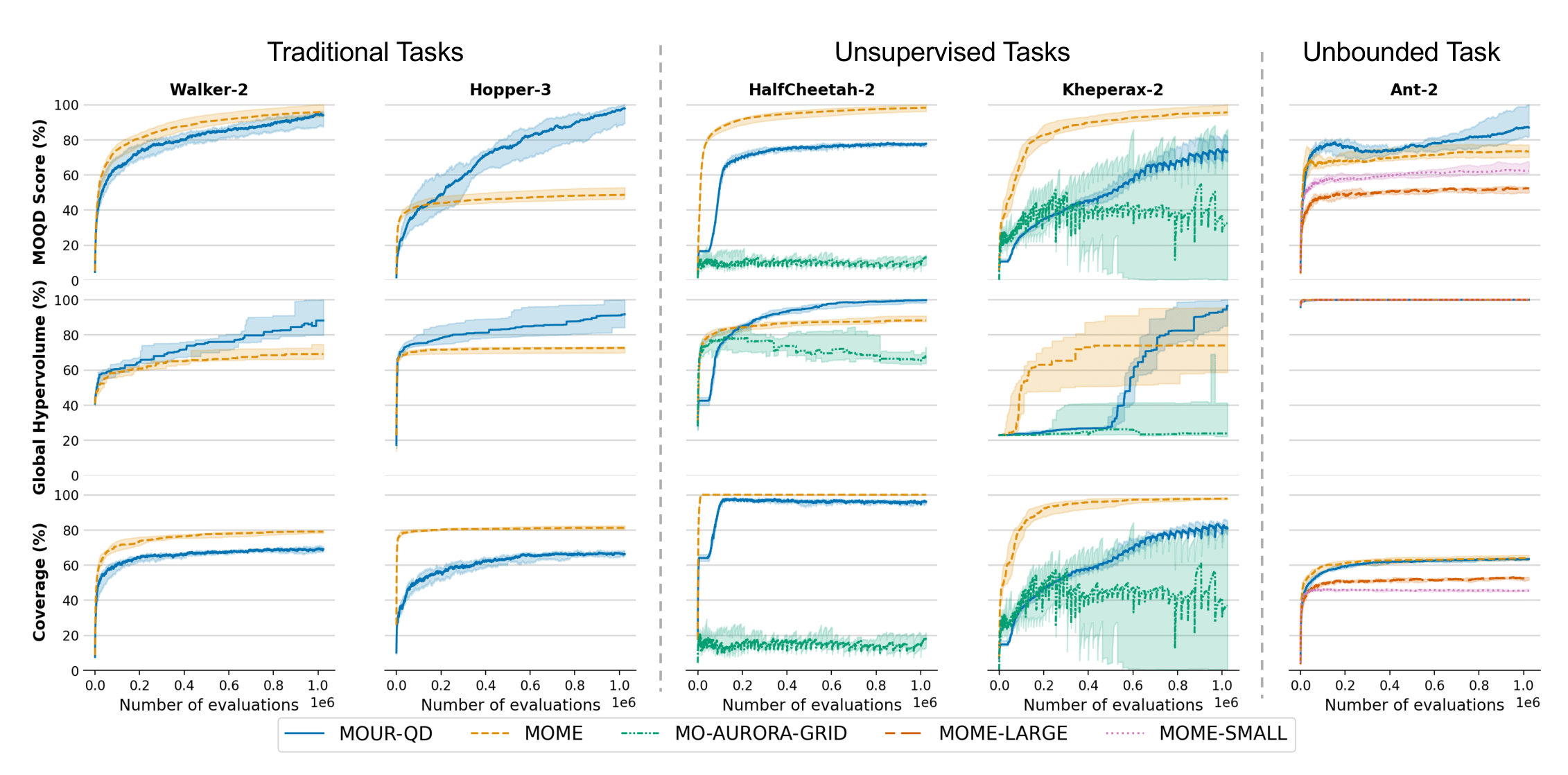}
    \caption{Performance of \mourqd compared to all other baselines. The line shows the median score and the shaded region shows the interquartile range across \replications seeds. In unsupervised tasks, \mome provides an approximate upper bound for performance, rather than a comparative baseline.}
    \label{fig:results}
\end{figure*}

\subsection{Baselines} 

\subsubsection{Traditional Multi-Objective Quality-Diversity Tasks}
In both traditional \moqd tasks we compare \mourqd to \mome.
We exclude other \moqd methods that incorporate additional components, such as gradient-based updates or counterfactual mechanisms \cite{c-mome, mome-p2c, mome-pgx}, as these rely on components that are beyond the scope of this work.
Instead, we note that these techniques could complement \mourqd and this is left as a direction for future work.

\subsubsection{Unsupervised Multi-Objective Quality-Diversity Tasks}
In both unsupervised \moqd tasks, we compare \mourqd to two baselines.
The first baseline is \mome which uses the ground-truth features.
Since this baseline leverages pre-defined features, we consider it to be an upper bound for performance.
Secondly, we compare our method to a \mo version of \aurora that uses a grid (\moauroragrid).
In this baseline, learned features $\phi_{\text{learned}}$ are first extracted by generating random solutions and training the auto-encoder.
These learned features are then used to estimate the bounds of a \mome grid, which is tessellated in the usual manner.
Once the grid is defined, it remains fixed for the duration of the algorithm, and the process proceeds similarly to \mome but using $\phi_{\text{learned}}$ for features.
Unlike \mourqd, \moauroragrid does not use an unstructured repertoire, and hence it does not employ container-size control.
However, the auto-encoders are periodically retrained at the same frequency to ensure consistent learning of features.

\subsubsection{Unbounded Multi-Objective Quality-Diversity Tasks}
In the unbounded task, we evaluate our method, \mourqd, against three \mome baselines to assess how \mome performance is affected when grid boundaries are misestimated.
To establish a reference for the true limits of the feature space, we first run \mourqd and \mome using a very large grid over several seeds.
From these experiments, we estimate the true bounds of the feature space to be approximately $[-60, 60]$.
Using this information, we define three \mome baselines.
The first baseline, \mome, uses the correctly estimated bounds of $[-60, 60]$, representing an ideal setup.
The second baseline, which we denote as \momesmall, intentionally restricts the grid bounds to $[-20, 20]$ to simulate an underestimation of the feature space limits.
Conversely, the final baseline, \momelarge, uses overestimated bounds of $[-80, 80]$, introducing excess space in the grid.
By comparing these baselines, we aim to highlight how incorrect boundary definitions affect the performance of \mome and demonstrate the advantages of \mourqd in unbounded environments.

\subsection{Experiment Designs}

All algorithms were run with a batch size of $256$ for $4000$ iterations, meaning a total of $1,024,000$ evaluations.
We use the isoline variation operator \cite{isoline} with $\sigma_1=0.005$, and $\sigma_2=0.05$ across all experiments.
For all grid-based methods, we use a CVT tessellation \cite{cvt} to discretise the feature space into $512$ cells, each with a maximum Pareto Front size of $10$.
Accordingly, for \mourqd we used a maximum archive size of $512 \times 10 = 5120$, to ensure the same maximum population size.
We provide further information about the parameters for unsupervised tasks in \Cref{app:aurora_hyperparams} and the $l$-values used in each environment in \Cref{app:l_values}.

\subsection{Evaluation Metrics}

Unstructured archives present challenges for evaluation using standard metrics \cite{aurora-original, aurora-csc}.
Unlike structured archives, where metrics like the sum of solution fitnesses in the grid (\qdscore) naturally incorporate both quality and diversity, unstructured archives lack this inherent structure to reflect diversity.
For this reason, to evaluate \mourqd, we project the solutions onto a \mome grid and use standard \moqd metrics as a proxy for performance.
In unsupervised tasks, we use the "true" features of solutions in the archive for projection.
Similarly, since the \moauroragrid, \momesmall and \momelarge baselines use different tessellations, we project solutions from these onto a \mome repertoire and evaluate performance accordingly.
This allows for a fair comparison while still capturing both quality and diversity.
Using these projected repertoires, we evaluate \mourqd and baseline algorithms using three metrics:

\begin{enumerate}[leftmargin=*]
    \item \textbf{\moqdscore}: The \moqdscore quantifies the performance of solutions in the archive by summing the hypervolumes of the Pareto fronts within each cell of the \mome grid (see \Cref{eqn:moqd}). This metric, similar to the \qdscore in single-objective settings \cite{qdunifying}, captures both the diversity of solutions across the feature space and their performance.
    \item \textbf{\globalhypscore}: The global hypervolume measures the hypervolume of the Pareto front formed from all solutions in the archive. This metric evaluates the best trade-offs across objectives found by an algorithm, independent of the feature space. As well as presenting this numerically, we provide visualisations of the Global Pareto Fronts in \Cref{app:global_pfs}.
    \item \textbf{\coverage}. The \coverage is the fraction of cells in the archive that contain at least one solution. This metric reflects the diversity of solutions within the feature space.
\end{enumerate}

The reference points use to calculate the \moqdscore and the \globalhypscore for each task can be found in \Cref{app:refpoints}.
\section{Results}\label{sec:results}

\begin{figure*}[t] 
    \centering
    \begin{subfigure}[b]{0.49\textwidth} 
        \centering
        \includegraphics[width=\linewidth]{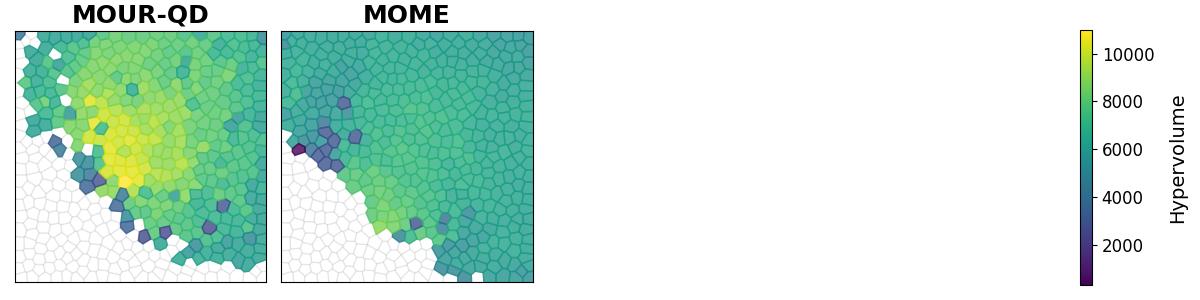}
        \caption{\walkertwo}
        \label{fig:rep_walker}
    \end{subfigure}
    \hfill
    \begin{subfigure}[b]{0.49\textwidth}
        \centering
        \includegraphics[width=\linewidth]{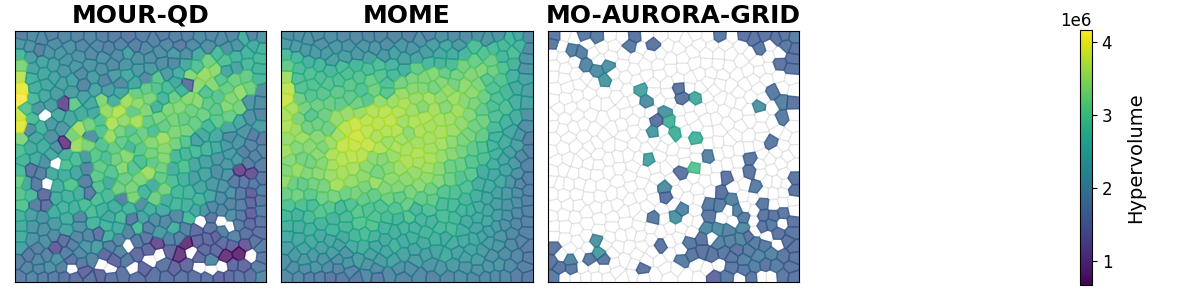}
        \caption{\halfcheetah}
        \label{fig:rep_hc}
    \end{subfigure}

    \begin{subfigure}[b]{0.49\textwidth}
        \centering
        \includegraphics[width=\linewidth]{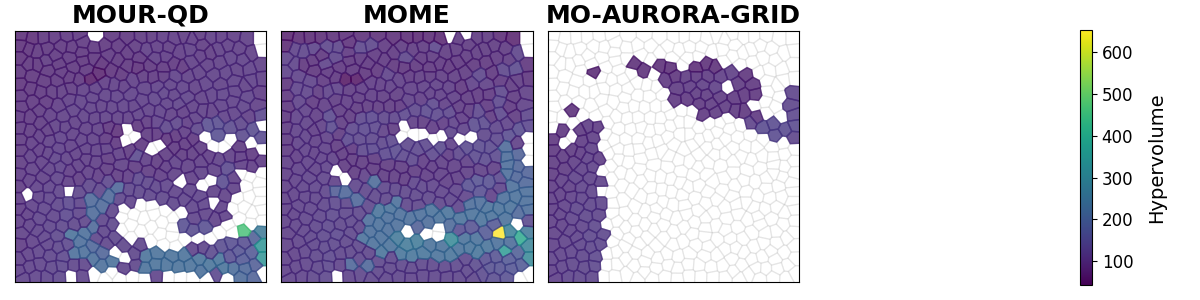}
        \caption{\kheperaxmulti}
        \label{fig:rep_khep}
    \end{subfigure}
    \hfill
    \begin{subfigure}[b]{0.49\textwidth}
        \centering
        \includegraphics[width=\linewidth]{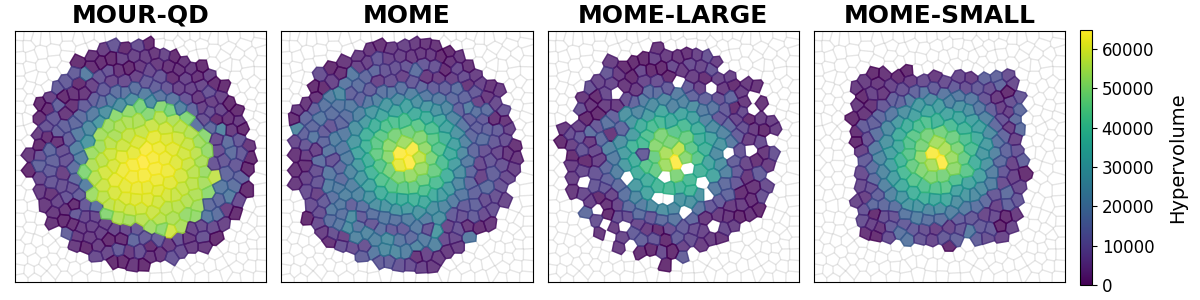}
        \caption{\antmulti}
        \label{fig:rep_ant}
    \end{subfigure}

    \caption{Final repertoire plots from the median run of each algorithm. Only tasks with 2-dimensional features are shown.}
    \label{fig:repertoires}
\end{figure*}

The results of our experiments are presented in \Cref{fig:results}.
All of our baselines were run for \replications replications, and we report $p$-values from a Wilcoxon signed-rank test \cite{wilcoxon1992individual} using a Holm-Bonferroni correction \cite{holm_bonf}.
Overall, our results demonstrate that \mourqd matches or outperforms all baselines in all environments.

\subsubsection{Traditional Multi-Objective Quality-Diversity Tasks}\label{sec:moqd_results}

\Cref{fig:results} demonstrates that is competitive with \mome in both traditional \moqd tasks.
When comparing \mourqd and \mome on the \moqdscore, there is no significant difference between the two methods in \walkertwo ($p=0.232$), while \mourqd significantly outperforms \mome in \hopperthree ($p<0.004$), approximately doubling the \moqdscore.
This result indicates that \mourqd effectively discovers a wide range of high-performing trade-offs that are diverse across features, which is the central goal of \moqd.

Additionally, \mourqd outperforms \mome on \textsc{global-} \textsc{hypervolume} in both environments (p<0.006), showing its ability to identify optimal trade-offs when features are not explicitly considered.
We hypothesise that this is due to the continuous nature of \mourqd's repertoire, which promotes greater competition among solutions.
Unlike cell-based approaches, where solutions near the boundaries of a cell do not compete with those in adjacent cells, the unstructured design of \mourqd ensures that all solutions compete directly, regardless of their position in the feature space.
This more competitive addition process likely contributes to its improved overall performance.
This advantage is further visualised in \Cref{fig:rep_walker}, where many cells exhibit high hypervolumes.

A limitation of \mourqd is that it achieves a lower coverage score than \mome in both environments ($p<0.002$) as shown in \Cref{fig:results}.
Examining \Cref{fig:rep_walker} shows that some cells in the \mourqd repertoire are empty.
These empty may partly result from projecting the continuous \mourqd repertoire onto a \mome grid.
However, other empty cells in \Cref{fig:rep_walker} suggest that \mourqd may struggle to retain solutions near the boundaries of the feature space, indicating that \mourqd is susceptible to erosion in these areas \cite{qdunifying}.

Nevertheless, our results demonstrate that \mourqd performs comparably to \mome and even outperforms it in certain domains, despite using an unstructured archive.

\subsubsection{Unsupervised Multi-Objective Quality-Diversity Tasks}\label{sec:unsupervised_results}

\Cref{fig:results} demonstrates that \mourqd also performs well in unsupervised environments, achieving approximately $80\%$ of the \moqdscore of \mome in both environments.
Moreover, \mourqd achieves a statistically significant higher \globalhypscore in \halfcheetah ($p<0.006$).
We emphasise that, in these tasks, \mome should not considered as a comparative baseline.
Specifically, \mome benefits from hand-defined features which were constructed for evaluation in this task but would not ordinarily be available in unsupervised tasks.
Additionally, \mome is evaluated directly on the repertoire it is evolved on which is defined by these hand-defined features.
By contrast, \mourqd may lose solutions when its continuous repertoire is projected onto the discrete grid used to compute metrics.
Moreover, although the tasks were designed such that the data used to learn the features is highly correlated with the hand-defined features of the grid used to compute the metrics, they may not align perfectly with the features learned by the auto-encoders.
For example, in \halfcheetah, \mourqd may find diversity in other aspects of the robot's trajectory, rather than just its feet contact.
Thus, it is particularly notable that \mourqd remains competitive to \mome — despite the latter effectively representing an upper bound— showing that it has been able to discover a large variety of solutions.

Notably, \moauroragrid performs poorly in \halfcheetah and is highly unstable in \kheperaxmulti, likely due to shifting latent space bounds as the auto-encoder trains.
As these bounds change, the grid fails to accurately capture the evolving feature space, resulting in many cells being left empty, as illustrated in \Cref{fig:rep_hc} and \Cref{fig:rep_khep}.
By contrast, \mourqd is able to adapt as the encoder trains and achieves a fuller repertoire, confirming its ability to effectively handle unsupervised tasks.
These results highlight the potential for applying \mourqd requiring unsupervised feature discovery.

\subsubsection{Unbounded Multi-Objective Quality-Diversity Tasks}\label{sec:bounded_results}

\Cref{fig:results} also demonstrates that \mourqd performs effectively in tasks where the bounds of the feature space are unknown.
Notably, \Cref{fig:results} demonstrates that when the boundaries of the \mome grid are set either too large or too small, performance is suboptimal, both in terms of \moqdscore and \coverage. 
This is further illustrated in the repertoire plots in \Cref{fig:rep_ant}.
Specifically, in \momelarge, when the bounds of the \mome grid are too large, the algorithm explores the feature space effectively, reaching its boundaries, but fails to find high-performing trade-offs.
Conversely when the bounds are too small, as in \momesmall, the algorithm shows improved exploitative performance, as indicated by the higher hypervolume in \Cref{fig:rep_ant}, but it is unable to fully explore the feature space.
These findings highlight the critical importance of carefully setting the correct grid boundaries in \mome, as inaccurate bounds can hinder performance.
When \mome is set with the correct feature space boundaries, it performs reasonably well.
However, we find that \mourqd is able to achieve a higher \moqd performance ($p<0.03$) than \mome, even without the need to specify these boundaries in advance.
As visualised \Cref{fig:rep_ant}, this is because \mourqd is able to both explore effectively by automatically adapting to the feature space limits, while also finding better trade-offs compared to \mome.

\section{Conclusion}

In this work, we presented \mourqdlong (\mourqd), a novel \moqd method designed to address challenges in tasks with unstructured archives and unknown feature space bounds. We demonstrated that \mourqd matches or outperforms existing \moqd methods on traditional tasks, while also excelling in scenarios where feature space boundaries are unknown or poorly defined.
Moreover, we showed that \mourqd can be used in tasks that require learned feature spaces, opening up \moqd to unsupervised domains.
In the future, we aim to improve the performance of the coverage of our algorithm by addressing its susceptibility to corrosion \cite{qdunifying} and apply our method to tackle more complex domains, such as protein design or latent exploration tasks.

\section*{Acknowledgements}
This work was supported by PhD scholarship funding for Hannah from InstaDeep.


\bibliographystyle{ACM-Reference-Format}
\bibliography{main}

\clearpage
\appendix
\onecolumn
\numberwithin{equation}{section}

\section*{Supplementary Materials}

\section{Proof of Local Improvement}\label{app:proof}

\begin{theorem}
    Consider an archive $\mathcal{A}$ which comprises $n$ solutions $x_1, ..., x_n \in \mathcal{X}$ and a distance metric $d: \mathcal{X} \rightarrow \mathbb{R}$. Let $\mathcal{P}(x, r, \mathcal{A})$ be defined as the set of solutions in the archive $\mathcal{A}$ within radius $r$ of solution $x$:
    \begin{equation}
        \mathcal{P}(x, r, \mathcal{A}) := \{y \in \mathcal{A}\,|\, d(x, y) < r\}\,.
    \end{equation}
    Now consider $x_m \notin \mathcal{A}$ such that $x_m \succ x_i$ and $d(x_i, x_m)<r$ for an arbitrary solution $x_i\in \mathcal{A}$.
    Let $\mathcal{A}'$ denote an alternative archive such that $\mathcal{A}' := \mathcal{A}\cup\{x_m\}\backslash\{x_i\}$. Then $ \forall x_j \in \mathcal{P}(x_i, r, \mathcal{A})$
        \begin{equation}
            \Xi(\mathcal{P}(x_j, 2r, \mathcal{A}')) > \Xi(\mathcal{P}(x_j, 2r, \mathcal{A}))\,,\label{eqn:theorem}
        \end{equation}
    where $\Xi$ denotes the hypervolume metric .
\end{theorem}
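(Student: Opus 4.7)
The plan is to reduce the statement to a purely local comparison of two finite solution sets and then invoke monotonicity of the hypervolume. The key idea is that, once we restrict attention to the ball of radius $2r$ around $x_j$, the passage from $\mathcal{A}$ to $\mathcal{A}'$ amounts to exchanging $x_i$ for the dominating solution $x_m$, while all other members of the archive either lie in both balls or lie in neither. Consequently the ordering of hypervolumes will follow from the fact that $x_m$'s dominated region in objective space contains that of $x_i$.

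First I would verify the crucial geometric inclusion $x_m \in \mathcal{P}(x_j, 2r, \mathcal{A}')$. Since $x_j \in \mathcal{P}(x_i, r, \mathcal{A})$ we have $d(x_i, x_j) < r$, and by hypothesis $d(x_i, x_m) < r$, so the triangle inequality gives $d(x_j, x_m) < 2r$. Together with the elementary observation that any point $y \in \mathcal{A} \setminus \{x_i\}$ satisfies $y \in \mathcal{P}(x_j, 2r, \mathcal{A}) \iff y \in \mathcal{P}(x_j, 2r, \mathcal{A}')$, and noting that $x_i$ itself lies in $\mathcal{P}(x_j, 2r, \mathcal{A})$ (because $d(x_j, x_i) < r < 2r$), this yields the clean identity
\begin{equation}
    \mathcal{P}(x_j, 2r, \mathcal{A}') \;=\; \bigl(\mathcal{P}(x_j, 2r, \mathcal{A}) \setminus \{x_i\}\bigr) \cup \{x_m\}.
\end{equation}

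Next I would compare the hypervolumes of these two finite sets. Let $S := \mathcal{P}(x_j, 2r, \mathcal{A}) \setminus \{x_i\}$, so the comparison becomes $\Xi(S \cup \{x_m\})$ versus $\Xi(S \cup \{x_i\})$. Because $x_m \succ x_i$, the region of objective space dominated by $x_m$ strictly contains the region dominated by $x_i$; hence $\Xi(S \cup \{x_i, x_m\}) = \Xi(S \cup \{x_m\})$, while monotonicity of $\Xi$ under set inclusion immediately gives $\Xi(S \cup \{x_i, x_m\}) \ge \Xi(S \cup \{x_i\})$. Chaining these establishes the non-strict version of \Cref{eqn:theorem}.

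The delicate step, which I expect to be the main obstacle, is promoting this inequality to a strict one: the extra region dominated by $x_m$ but not by $x_i$ must not be entirely absorbed by the dominated regions of the other members of $S$. I would argue this by using the strictness in $x_m \succ x_i$ to exhibit a point in objective space strictly dominated by $x_m$ and not by $x_i$, and then invoke the archive's addition rule, which guarantees that $x_i$ itself was not dominated by any element of $S$ at the time it was retained in $\mathcal{A}$; an appropriate refinement of this observation, together with the fact that $\Xi$ is defined via the Lebesgue measure (so any open slab of new dominated volume contributes positively), will yield the strict inequality. If the archive invariant does not immediately suffice, the fallback is to weaken the statement to $\ge$ and remark that strictness holds generically, since strictness is the property actually needed to justify the replacement rule in \mourqd.
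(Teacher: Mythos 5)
Your argument follows essentially the same route as the paper's own proof: the triangle inequality gives $d(x_j,x_m)<2r$, every other archive member lies in the $2r$-ball around $x_j$ for $\mathcal{A}$ if and only if it does for $\mathcal{A}'$, hence $\mathcal{P}(x_j,2r,\mathcal{A}')$ is obtained from $\mathcal{P}(x_j,2r,\mathcal{A})$ by exchanging $x_i$ for $x_m$, and the hypervolume comparison then rests on $x_m\succ x_i$. Up to and including the non-strict inequality your write-up is correct and, if anything, spelled out more carefully than the paper's version of the same steps.

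On the strictness issue you flag: you are right that it is not automatic, and you should know that the paper does not actually close it either --- its final step just invokes ``the monotonic property of the hypervolume metric,'' which only yields $\Xi(\mathcal{P}(x_j,2r,\mathcal{A}'))\ge\Xi(\mathcal{P}(x_j,2r,\mathcal{A}))$. Be aware, though, that your proposed repair via the archive's addition rule does not follow from the stated hypotheses: the theorem quantifies over an arbitrary archive, and even under the algorithm's rules a solution $y\in\mathcal{P}(x_j,2r,\mathcal{A})\setminus\{x_i\}$ can lie farther than $r$ from $x_m$ (up to roughly $3r$ away), so it never competed with $x_m$ and may weakly dominate it; in that case the two dominated regions coincide and the strict inequality fails. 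Your fallback is therefore the defensible conclusion: prove $\ge$ unconditionally, and note that strictness holds under the extra assumption that no element of $\mathcal{P}(x_j,2r,\mathcal{A})\setminus\{x_i\}$ weakly dominates $x_m$ --- in that case, choosing $\epsilon>0$ smaller than every coordinate gap separating $x_m$ from those solutions, from $x_i$ (in a coordinate where $x_m$ is strictly better), and from the reference point, the open box of side $\epsilon$ just below $x_m$ in objective space is dominated by $x_m$ alone and contributes positive Lebesgue measure, giving the strict inequality.
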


\begin{proof}
    First, we note that:
    \begin{align}
    d(x_j, x_m) &\leq d(x_j, x_i) + d(x_i, x_m)\label{eqn:triangle} \\
        &< r + r\label{eqn:r} \\ 
        & = 2r \label{eqn:radius} \,
    \end{align}
where \Cref{eqn:triangle} holds via the triangle inequality and \Cref{eqn:r} holds since $x_j \in \mathcal{P}(x_i, r, \mathcal{A})$ and $d(x_i, x_m)<r$ is assumed. \Cref{eqn:radius} implies that $x_m \in \mathcal{P}(x_j, 2r, \mathcal{A}')$. We also note that $x_i\in \mathcal{P}(x_j, 2r, \mathcal{A})$ since $x_i\in \mathcal{P}(x_j, r, \mathcal{A})$ by definition, but $x_i\notin \mathcal{P}(x_j, 2r, \mathcal{A}')$ since $x_i\notin \mathcal{A}'$. Hence, since $\mathcal{A}$ is the same as $\mathcal{A}$ except that $x_i \in \mathcal{A}$ and $x_m \in \mathcal{A}'$, we have that

\begin{equation}
     \mathcal{P}(x_j, 2r, A') = \mathcal{P}(x_j, 2r, A')\cup\{x_m\}\backslash\{x_i\}\,.
     \label{eqn:difference}
\end{equation}
In other words, the collection of solutions with a radius of $2r$ of $x_j$ is the same in $\mathcal{A}$ as it is in  $\mathcal{A}$, except $x_i$ is replaced by $x_m$. However, since $x_m \succ x_i$, by the monotonic property of hypervolume metric, \Cref{eqn:theorem} follows.
\end{proof}

\section{Experimental Setup Details}

\subsection{Fitness Functions}\label{app:rewardfns}

In this section, we include further details about the reward functions for each of the environments.

For \walkertwo and \halfcheetah, the first objective is the forward velocity of the robot, defined as follows:

\begin{equation}
    f_1 = \sum_{t=1}^T \frac{x_t - x_{t-1}}{\delta t}\,,\\
\end{equation}\label{reward:forward}

where $x_t$ and $x_{t-1}$ denote the positions of the robot's centre of gravity at time-steps $t$ and $t-1$ respectively and $\delta t$ denotes the length of one time-step.
The second objective for these tasks is to minimise the energy consumption which defined as follows:

\begin{equation}
    f_2 = - \sum_{t=1}^T ||a_t||_2\,,\\
\end{equation}\label{reward:energy}

where $|| \cdot ||_2$  denotes the Euclidean norm and $a_t$ denotes the action taken by the robot at time-step $t$.

\hopperthree is a tri-objective environment where the first two objectives are the forward velocity of the robot and its energy consumption, taken as \Cref{reward:forward} and \Cref{reward:energy} respectively. The third objective in \hopperthree is the robot's jumping height, given as:

\begin{equation}
    f_3 = \sum_{t=1}^T z_t\,, \\
\end{equation}\label{reward:jump}

where $z_t$ denotes the height of the robot's torso at time-step $t$.

The final task from the Brax suite \cite{brax} is \antmulti.
In this task, we consider the problem of Uncertain Quality-Diversity \cite{uqd} (\uqd).
In particular, \uqd seeks to address a key limitation of single-objective \qd algorithms which lies in their elitism.
While elitism does help \qd algorithms discover high-performing solutions, it can also pose challenges in stochastic environments.
In environments that have inherent randomness, solutions can be "lucky" and achieve a high fitness in one evaluation, but may not be robust across multiple evaluations.
However, since traditional \qd algorithms store at most one solution per cell, these lucky solutions may be kept and cause the algorithm to inadvertently discard solutions that are more reliable or robust across multiple evaluations.

One approach for addressing this limitation is to include reproducibility as a meta-objective within \moqd algorithms \cite{performance-reproducibility}.
By explicitly optimizing for both fitness and reproducibility, this approach can help mitigate the limitations of elitism, ensuring that selected solutions are not only high-performing but also consistent and robust across evaluations.
Therefore, in the \antmulti environment, instead of a traditional multi-objective task where all objectives are derived solely from evaluating solutions, we use \moqd to optimise trade-offs between the fitness of solutions and their reproducibility. 
In particular, we take use the standard fitness function used in Ant-Omni tasks in single-objective Quality-Diversity environments, which combines the robots energy consumption and a survival bonus \cite{pga} as a main objective.
Then, rather than evaluate each solution just once, we evaluate them 16 times.
The first fitness of the \moqd algorithms, is the average fitness of the solution across these 16 replications: 

\begin{equation}
    f_1 = \sum_{i=1}^{16} \Tilde{f}_i(\theta)\,.
\end{equation}

Here, $\Tilde{f}_i$ denotes the fitness from the $i$-th evaluation of the solution $\theta$. The second objective in \antmulti is a solution's reproducibility, which we calculate as:

\begin{equation}
    f_2 = - \text{std}\big(\Tilde{\phi}_i(\theta), ..., \Tilde{\phi}_{16}(\theta)\big)\,,
\end{equation}

where \text{std} denotes the standard deviation, and $\Tilde{\phi}_i(\theta)$ denotes the feature from the $i$-th evaluation of the solution $\theta$.

The final environment that we evaluate our methods on is \kheperaxmulti. In this task, a roomba-style robot must navigate a deceptive maze in order to reach a target. We take the first objective in this environment to be the energy consumption of the robot, as in the traditional Kheperax environments \cite{nslc, aurora-original}. Similar to the Brax environments, this is given as:

\begin{equation}
    f_1 = - \sum_{t=1}^T ||a_t||_2\,,\\
\end{equation}

where $a_t$ denotes the wheel velocities of the robot at time-step $t$.
To make the task multi-objective, we use a time-step distance to the goal as a second objective and use a small positive bonus when the robot is within $0.12$ from the goal. This objective can be defined as:

\begin{equation}
    f_2 = \sum_{t=1}^T r_t\,,\\
\end{equation}

where,


\begin{equation}
     r_t =
    \begin{cases}
        -  || x - goal||_2\,\,\, \text{if distance > 0.12}\,, \\ 
        + \,5\,\,\,\,\,\,\,\,\,\,\,\,\,\,\,\,\,\,\,\,\,\,\,\,\,\,\,\,\,\,\, \text{otherwise}\,.
    \end{cases}
\end{equation}
Here $x$ and $goal$ denote the positions of the robot and the target respectively.
We introduced the reward bonus after experimenting with the \mome algorithm in the \kheperaxmulti environment, where we observed that the algorithm struggled to explore the maze effectively without this additional signal. Specifically, without the bonus, \mome tended to get trapped in deceptive local optima, repeatedly finding different trade-offs within the same region of the search space, instead of exploring other areas of the maze. This issue arose because, in the absence of the bonus, the algorithm could continue to improve by refining its solutions within this local region, rather than being incentivized to explore beyond it.
In contrast to single-objective algorithms, which have been shown to work effectively without such a reward signal \cite{noveltysearch, aurora-original}, we found that the multi-objective nature of this task required a slightly stronger reward signal to promote broader exploration. The bonus effectively encouraged the algorithm to diversify its search, preventing it from getting stuck in local optima and enabling more thorough exploration of the maze.

\subsection{AURORA 
Hyperparameters}\label{app:aurora_hyperparams}
In \Cref{tab:aurora_hyps} we include the parameters we used for all unsupervised tasks (\halfcheetah and \kheperaxmulti).
For all baselines which used container-size control (see \cref{eqn:csc}), we performed container-size control after every iteration and set the target size of the archive $N_\text{target}$ to be $95\%$ of the archive's maximum size, $4,864$.
In both tasks, we trained the encoders in linearly increasing intervals of $2$, i.e. at iterations $2, 4, 8, 16$ etc. In both tasks, each time the encoders were trained, they were trained for a maximum of $200$ epochs, with early stopping to prevent over fitting. 

\begin{table}[ht!]
    \centering
        \caption{Reference points}
    \begin{tabular}{  l | l | l }
        \toprule
    Parameter & \halfcheetah & \kheperaxmulti \\
    \midrule
    \midrule
    Encoder & LSTM & CNN \\
    \midrule
    Layers &  1 & \makecell{Encoder: [16, 16, 16]\\ Decoder: [16, 16, 16]} \\
    \midrule
    Hidden size & \multicolumn{2}{c}{\makecell{10}}\\
    \midrule
    Batch size & \multicolumn{2}{c}{\makecell{256}}\\
    \midrule
    Learning rate & \multicolumn{2}{c}{\makecell{0.001}}\\
    \end{tabular}      	
    \label{tab:aurora_hyps}
\end{table}

\newpage

\subsection{Unstructured Repertoire Parameters} \label{app:l_values}

\Cref{tab:hyps} shows the $l$-values and $k$ used in container size control (\Cref{eqn:csc}) for each of the environments.

\begin{table}[h!]
    \centering
        \caption{$l$-values and $k$ in each task}
    \begin{tabular}{ | l | l | l | l | l | l| }
        \toprule
         & \walkertwo & \hopperthree& \halfcheetah & \kheperaxmulti & \antmulti \\
        \midrule
        \midrule
        Initial $l$-value & 0.03 & 0.05 & 0.01 & 0.01 & 1\\
        \midrule
        $k$ & - & - & 0.0001 & 0.001 & -\\
        \bottomrule
    \end{tabular}      	
    \label{tab:hyps}
\end{table}

\subsection{Hypervolume Reference Points} \label{app:refpoints}

\Cref{tab:refpoints} shows the reference points used for each of the environments.

\begin{table}[h]
    \centering
        \caption{Reference points}
    \begin{tabular}{ | l | l | }
        \toprule
    Environment & Reference Point \\
    \midrule
    \midrule
    \walkertwo &  [-210, -15] \\
    \midrule
    \hopperthree &  [-750, -3, 0] \\
    \midrule
    \halfcheetah & [-2000, -800] \\
    \midrule
    \kheperaxmulti & [-1, -300] \\
    \midrule
    \antmulti & [-300, -50] \\
    \bottomrule
    \end{tabular}      	
    \label{tab:refpoints}
\end{table}

\section{Global Pareto Fronts}\label{app:global_pfs}
In this section we include visualisations of the global Pareto Fronts achieved by each of the algorithm in each task.

\begin{figure}[hb!]
    \centering
    \includegraphics[width=0.99\linewidth]{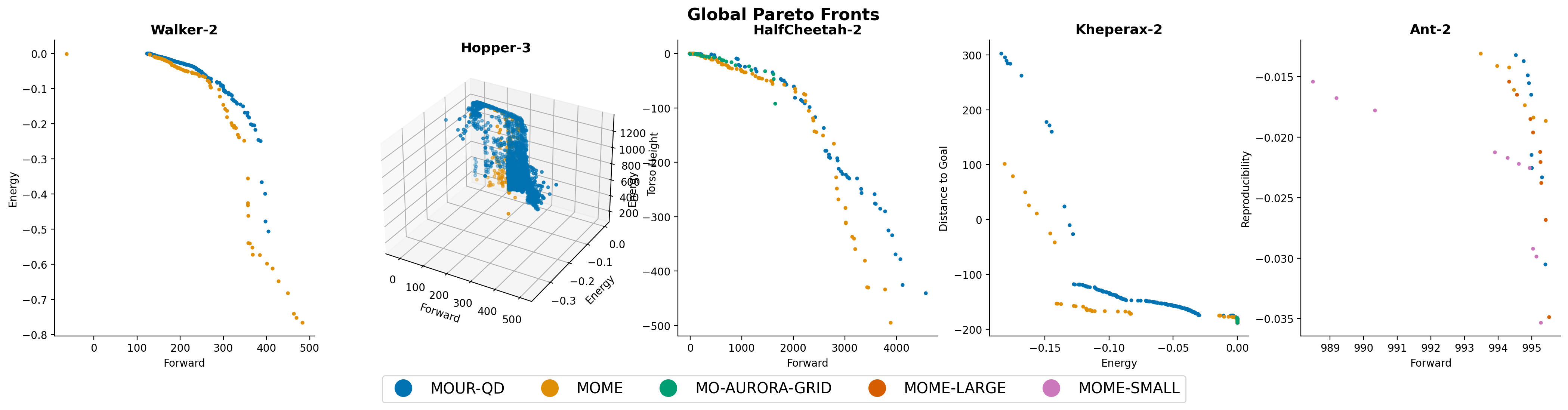}
    \caption{Global Pareto Fronts obtained by each algorithm.}
    \label{fig:global_pfs}
\end{figure}

The plot demonstrates that \mourqd achieves high-performing global Pareto Fronts, with many solutions dominating the solutions found by other algorithms.

\end{document}